\newcommand{\var}{\text{Var}}
\newcommand{\Bellman}{T}
\newcommand{\Pmat}{P}
\newcommand{\Rvec}{R}
\newcommand{\tpol}{\pi}
\newcommand{\bpol}{\mu}
\newcommand{\tdist}{d_\pi}
\newcommand{\bdist}{d_\mu}
\newcommand{\T}{^\top}
\newcommand{\R}{\mathbb R}
\newcommand{\ExpBy}[3]{\mathbb{E}^{#2}\left[ \left. #1 \right| #3 \right]}
\newcommand{\beq}{\begin{equation}}
\newcommand{\eeq}{\end{equation}}
\newcommand{\beqa}{\begin{eqnarray}}
\newcommand{\eeqa}{\end{eqnarray}}
\newcommand{\beqan}{\begin{eqnarray*}}
\newcommand{\eeqan}{\end{eqnarray*}}
\newcommand{\norm}[1]{\left\lVert#1\right\rVert}
\renewcommand{\P}{\mathbb{P}}
\newcommand{\E}{\mathbb{E}}
\renewcommand{\phi}{\varphi}
\renewcommand{\epsilon}{\varepsilon}
\newcommand{\eqdef}{\stackrel{\rm def}{=}}
\newcommand{\ETD}{ETD($\lambda$, $\beta$)}
\newcommand{\ETDO}{ETD($0$, $\beta$)}
\newtheorem{Lemma}{Lemma}
\newtheorem{Theorem}{Theorem}
\newtheorem{Example}{Example}
\newtheorem{Remark}{Remark}
\newtheorem{corollary}{Corollary}
\newtheorem{Proposition}{Proposition}
\begin{document}

%
\title{Generalized Emphatic Temporal Difference Learning: Bias-Variance Analysis}
\author{Assaf Hallak, Aviv Tamar, R$\acute{\text{e}}$mi Munos, Shie Mannor}

\maketitle
\begin{abstract}
\begin{quote}
We consider the off-policy evaluation problem in Markov decision processes with function approximation. We propose a generalization of the recently introduced \emph{emphatic temporal differences} (ETD) algorithm \citep{SuttonMW15}, which encompasses the original ETD($\lambda$), as well as several other off-policy evaluation algorithms as special cases. We call this framework \ETD, where our introduced parameter $\beta$ controls the decay rate of an importance-sampling term. We study conditions under which the projected fixed-point equation underlying \ETD\ involves a contraction operator, allowing us to present the first asymptotic error bounds (bias) for \ETD. Our results show that the original ETD algorithm always involves a contraction operator, and its bias is bounded. Moreover, by controlling $\beta$, our proposed generalization allows trading-off bias for variance reduction, thereby achieving a lower total error.
\end{quote}
\end{abstract}

\section{Introduction}
In Reinforcement Learning (RL; \citealt{sutton_reinforcement_1998}), \emph{policy-evaluation} refers to the problem of evaluating the value function -- a mapping from states to their long-term discounted return under a given policy, using sampled observations of the system dynamics and reward. Policy-evaluation is important both for assessing the quality of a policy, but also as a sub-procedure for policy optimization.

For systems with large or continuous state-spaces, an exact computation of the value function is often impossible. Instead, an \emph{approximate} value-function is sought using various function-approximation techniques (a.k.a. approximate dynamic-programming; \citealt{Ber2012DynamicProgramming}). In this approach, the parameters of the value-function approximation are tuned using machine-learning inspired methods, often based on \emph{temporal-differences} (TD;\citealt{sutton_reinforcement_1998}).

The source generating the sampled data divides policy evaluation into two cases. In the \emph{on-policy} case, the samples are generated by the \emph{target-policy} -- the policy under evaluation; In the \emph{off-policy} setting, a different \emph{behavior-policy} generates the data. In the on-policy setting, TD methods are well understood, with classic convergence guarantees and approximation-error bounds, based on a contraction property of the projected Bellman operator underlying TD \citep{BT96}. These bounds guarantee that the asymptotic error, or ~\emph{bias}, of the algorithm is contained. For the off-policy case, however, standard TD methods no longer maintain this contraction property, the error bounds do not hold, and these methods might even diverge \citep{baird1995residual}.

The standard error-bounds may be shown to hold for an \emph{importance-sampling} TD method (IS-TD), as proposed by \citet{precup2001off}. However, this method is known to suffer from a high variance of its importance-sampling estimator, limiting its practicality.

Lately, \citet{SuttonMW15} proposed the \emph{emphatic TD} (ETD) algorithm: a modification of the TD idea, which converges off-policy \citep{yu2015etd}, and has a reduced variance compared to IS-TD. This variance reduction is achieved by incorporating a certain decay factor over the importance-sampling ratio. However, to the best of our knowledge, there are no results that bound the bias of ETD. Thus, while ETD is assured to converge, it is not known how good its limit actually is.

In this paper, we propose the \ETD\ framework -- a modification of the ETD($\lambda$) algorithm, where the decay rate of the importance-sampling ratio, $\beta$, is a free parameter, and $\lambda$ is the same bootstrapping parameter employed in TD($\lambda$) and ETD($\lambda$). By varying the decay rate, one can smoothly transition between the IS-TD algorithm, through ETD, to the standard TD algorithm.

We investigate the bias of \ETD, by studying the conditions under which its underlying projected Bellman operator is a contraction. We show that the original ETD possesses a contraction property, and present the first error bounds for ETD and \ETD. In addition, our error bound reveals that the decay rate parameter balances between the bias and variance of the learning procedure. In particular, we show that selecting a decay equal to the discount factor as in the original ETD may be suboptimal in terms of the mean-squared error.

The main contributions of this work are therefore a unification of several off-policy TD algorithms under the \ETD\ framework, and a new error analysis that reveals the bias-variance trade-off between them.

\paragraph{Related Work:} In recent years, several different off-policy policy-evaluation algorithms have been studied, such as importance-sampling based least-squares TD \citep{yu2012least}, and gradient-based TD \citep{sutton2009fast,liu2015finite}. These algorithms are guaranteed to converge, however, their asymptotic error can be bounded only when the target and behavior policies are similar \citep{bertsekas2009projected}, or when their induced transition matrices satisfy a certain matrix-inequality suggested by \citet{kolter2011fixed}, which limits the discrepancy between the target and behavior policies. When these conditions are not satisfied, the error may be arbitrarily large \citep{kolter2011fixed}. In contrast, the approximation-error bounds in this paper hold for \emph{general target and behavior policies}.

\section{Preliminaries}
We consider an MDP $M=(S, A, P, R, \gamma)$, where $S$ is the state space, $A$ is the action space, $P$ is the transition probability matrix, $R$ is the reward function, and $\gamma\in [0,1)$ is the discount factor.

Given a target policy $\tpol$ mapping states to a distribution over actions, our goal is to evaluate the \emph{value function}:
\begin{equation*}
    V^\tpol(s) \doteq \ExpBy{\sum_{t=0}^\infty R(s_t,a_t)}{\tpol}{s_0 = s}.
\end{equation*}

Linear temporal difference methods \citep{sutton_reinforcement_1998} approximate the value function by
\begin{equation*}
    V^\tpol(s) \approx \theta \T \phi(s)  ,
\end{equation*}
where $\phi(s)\in \R^n$ are state features, and $\theta \in \R^n$ are weights, and use sampling to find a suitable $\theta$.
Let $\bpol$ denote a behavior policy that generates the samples $s_0,a_0,s_1,a_1,\dots$ according to $a_t \sim \bpol(\cdot|s_t)$ and $s_{t+1}\sim P(\cdot|s_t,a_t)$. We denote by $\rho_t$ the ratio $\tpol(a_t|s_t)/\bpol(a_t|s_t)$, and we assume, similarly to \citet{SuttonMW15}, that $\bpol$ and $\tpol$ are such that $\rho_t$ is well-defined\footnote{Namely, if $\bpol(a|s)=0$ then $\tpol(a|s)=0$ for all $s\in S$.} for all $t$.

Let $\Bellman$ denote the Bellman operator for policy $\tpol$, given by $$\Bellman (V) \doteq \Rvec + \gamma \Pmat V,$$ where $\Rvec$ and $\Pmat$ are the reward vector and transition matrix induced by policy $\tpol$, and let $\Phi$ denote a matrix whose columns are the feature vectors for all states. Let $\bdist$ and $\tdist$ denote the stationary distributions over states induced by the policies $\bpol$ and $\tpol$, respectively. For some $d\in \R^{|S|}$ satisfying $d>0$ element-wise, we denote by $\Pi_d$ a projection to the subspace spanned by $\phi(s)$ with respect to the $d$-weighted Euclidean-norm.

For $\lambda=0$, the \ETDO\ \citep{SuttonMW15} algorithm seeks to find a good approximation of the value function by iteratively updating the weight vector $\theta$:
\begin{equation}\label{eq:ETD}
\begin{split}
    \theta_{t+1} &= \theta_t + \alpha F_t \rho_t (R_{t+1} + \gamma \theta_t \T \phi_{t+1} - \theta_t \T \phi_t) \phi_t \\
    F_t &= \beta \rho_{t-1}F_{t-1} + 1, \quad F_0 = 1,
\end{split}
\end{equation}
where $F_t$ is a decaying trace of the importance-sampling ratios, and $\beta\in (0,1)$ controls the decay rate.
\begin{Remark}
The algorithm of \citet{SuttonMW15} selects the decay rate equal to the discount factor, i.e., $\beta = \gamma$. Here, we provide more freedom in choosing the decay rate. As our analysis reveals, the decay rate controls a bias-variance trade-off of ETD, therefore this freedom is important. Moreover, we note that for $\beta=0$, we obtain the standard TD in an off-policy setting \cite{yu2012least}, and when $\beta=1$ we obtain the full importance-sampling TD algorithm \cite{precup2001off}.
\end{Remark}
\begin{Remark}
The ETD($0$, $\gamma$) algorithm of \citet{SuttonMW15} also includes a state-dependent emphasis weight $i(s)$, and a state-dependent discount factor $\gamma(s)$. Here, we analyze the case of a uniform weight $i(s)=1$ and constant discount factor $\gamma$ for all states. While our analysis can be extended to their more general setting, the insights from the analysis remain the same, and for the purpose of clarity we chose to focus on this simpler setting.
\end{Remark}

An important term in our analysis is the emphatic weight vector $f$, defined by
\begin{equation}\label{eq:f}
    f \T = \bdist \T (I - \beta \Pmat)^{-1}.
\end{equation}
It can be shown \citep{SuttonMW15,yu2015etd}, that \ETDO\ converges to $\theta^*$ - a solution of the following \emph{projected fixed point equation}:
\begin{equation}\label{eq:fixed_point_eq}
    V = \Pi_f \Bellman V, \qquad V\in \R^{|S|}.
\end{equation}
For the fixed point equation \eqref{eq:fixed_point_eq}, a contraction property of $\Pi_f \Bellman$ is important for guaranteeing both a unique solution, and a bias bound \citep{BT96}.

It is well known that $\Bellman$ is a $\gamma$-contraction with respect to the $\tdist$-weighted Euclidean norm \citep{BT96}, and by definition $\Pi_f$ is a non-expansion in $f$-norm, however, it is not immediate that the composed operator $\Pi_f \Bellman$ is a contraction in any norm. Indeed, for the TD(0) algorithm (\citealt{sutton_reinforcement_1998}; corresponding to the $\beta=0$ case in our setting), a similar representation as a projected Bellman operator holds, but it may be shown that in the off-policy setting the algorithm might diverge \citep{baird1995residual}.
In the next section, we study the contraction properties of $\Pi_f \Bellman$, and provide corresponding bias bounds.

\section{Bias of ETD($0$, $\beta$)}

%
In this section we study the bias of the \ETDO\ algorithm.
Let us first introduce the following measure of discrepancy between the target and behavior policies:
\begin{equation*}
\kappa \doteq \min_s \frac{\bdist (s)}{f(s)}.
\end{equation*}
\begin{Lemma}\label{lemma:kappa_bounds}
The measure $\kappa$ obtains values ranging from $\kappa = 0$ (when there is a state visited by the target policy, but not the behavior policy), to $\kappa = 1-\beta$ (when the two policies are identical).
\end{Lemma}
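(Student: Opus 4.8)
The plan is to work directly from the definition \eqref{eq:f} by expanding the resolvent as a Neumann series. Since $\beta \in (0,1)$ and $\Pmat$ is row-stochastic (spectral radius $1$), the series $(I-\beta\Pmat)^{-1} = \sum_{k=0}^\infty \beta^k \Pmat^k$ converges and
\begin{equation*}
f\T = \sum_{k=0}^\infty \beta^k \bdist\T \Pmat^k .
\end{equation*}
Every summand is entrywise nonnegative, and isolating the $k=0$ term gives $f(s) \ge \bdist(s) \ge 0$ for all $s$; hence $0 \le \bdist(s)/f(s) \le 1$ wherever $f(s)>0$, so in particular $\kappa \ge 0$.

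The second ingredient I would establish is a normalization identity. Right-multiplying by the all-ones vector $\mathbf 1$ and using $\Pmat^k \mathbf 1 = \mathbf 1$ together with $\bdist\T \mathbf 1 = 1$ yields
\begin{equation*}
\mathbf 1\T f = \sum_{k=0}^\infty \beta^k \bdist\T \Pmat^k \mathbf 1 = \sum_{k=0}^\infty \beta^k = \frac{1}{1-\beta},
\end{equation*}
so the entries of $f$ sum to $1/(1-\beta)$, while those of $\bdist$ sum to $1$.

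For the two extremes I would argue as follows. For the lower value, a state reached by the target policy from the support of $\bdist$ has $f(s)>0$ by the series expansion; if such a state is never visited by the behavior policy, i.e. $\bdist(s)=0$, then $\bdist(s)/f(s)=0$ and therefore $\kappa=0$. For the upper value, I would read the normalization identity as a weighted average: using the $f(s)$ as weights,
\begin{equation*}
\sum_s f(s)\,\frac{\bdist(s)}{f(s)} = \sum_s \bdist(s) = 1 = (1-\beta)\sum_s f(s),
\end{equation*}
so the $f$-weighted mean of the ratios $\bdist(s)/f(s)$ equals $1-\beta$; since a minimum never exceeds a weighted mean, $\kappa \le 1-\beta$. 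Equality forces every ratio to equal $1-\beta$, i.e. $f = \bdist/(1-\beta)$, which is exactly what happens when $\bpol=\tpol$: then $\bdist=\tdist$ is stationary for $\Pmat$, so $\bdist\T\Pmat=\bdist\T$ and $f\T = \sum_{k}\beta^k\bdist\T = \bdist\T/(1-\beta)$, giving $\kappa=1-\beta$.

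The main obstacle I anticipate is the careful treatment of the $\kappa=0$ case: one must pin down the precise meaning of ``visited by the target policy but not the behavior policy'' and confirm that such a state indeed has $f(s)>0$ while $\bdist(s)=0$, so that the ratio is a genuine zero rather than an undefined $0/0$. This amounts to identifying the support of $f$ with the set of states reachable from $\{s:\bdist(s)>0\}$ under $\Pmat$, and restricting the minimum in the definition of $\kappa$ to states with $f(s)>0$. The remaining steps are routine once the Neumann expansion and the normalization identity are in hand.
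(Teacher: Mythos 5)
Your proof is correct and follows essentially the same route as the paper's: the Neumann expansion of $(I-\beta\Pmat)^{-1}$ gives nonnegativity and $f(s)\ge\bdist(s)$, the normalization $\sum_s f(s)=1/(1-\beta)$ gives the upper bound via a pigeonhole/weighted-mean argument, and the two extremes are realized exactly as in the paper. Your explicit flagging of the $0/0$ issue in the definition of $\kappa$ is a small refinement the paper glosses over, but it does not change the argument.
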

The technical proof is given in the supplementary material. The following theorem shows that for \ETDO\ with a suitable $\beta$, the projected Bellman operator $\Pi_f \Bellman$ is indeed a contraction.
\begin{Theorem}\label{thm:one}
For $\beta>\gamma^2(1 - \kappa)$, the projected Bellman operator $\Pi_f \Bellman$ is a $\sqrt{\frac{\gamma^2}{\beta} (1 - \kappa)}$-contraction with respect to the Euclidean $f$-weighted norm, namely, $\forall v_1,v_2\in \R^{|S|}$:
\begin{equation*}
    \norm{ \Pi_f \Bellman v_1 - \Pi_f \Bellman v_2 }_f \leq \sqrt{\frac{\gamma^2}{\beta}(1-\kappa)} \|v_1 - v_2\|_f.
\end{equation*}
\end{Theorem}

\begin{proof}
Let $F = diag(f)$. We have
\begin{equation*}
\begin{split}
\norm{ v }^2_f - \beta \norm{ \Pmat v }^2_f &= v^\top F v - \beta v^\top \Pmat^\top F \Pmat v  \\
&\geq^{(a)} v^\top F v - \beta v^\top diag(f^\top \Pmat)v \\
& = v^\top [F - \beta diag(f^\top \Pmat)]v \\
& = v^\top \left[diag \left(f^\top (I-\beta \Pmat) \right)  \right] v \\
&=^{(b)} v^\top diag(\bdist) v = \norm{ v }^2_{\bdist},
\end{split}
\end{equation*}
where (a) follows from Jensen inequality:
\begin{equation*}
\begin{split}
v^\top \Pmat^\top F \Pmat v &= \sum_s f(s) ( \sum_{s'} \Pmat(s'|s) v(s') )^2 \\
&\leq \sum_s f(s)  \sum_{s'}\Pmat(s'|s)  v^2(s') \\
&= \sum_{s'} v^2(s') \sum_s f(s) \Pmat(s'|s) \\
&= v^\top diag(f^\top \Pmat)v,
\end{split}
\end{equation*}

and (b) is by the definition of $f$ in \eqref{eq:f}.

Notice that for every $v$:
\begin{equation*}
\norm{ v }^2_{\bdist} = \sum_s \bdist(s) v^2(s)  \geq \sum_s \kappa f(s) v^2(s)   = \kappa \norm{ v }^2_f
\end{equation*}

Therefore:
\begin{equation*}
\begin{split}
\norm{ v }^2_f & \geq \beta \norm{\Pmat v } ^2_f + \norm{ v }^2_{\bdist} \geq \beta \norm{\Pmat v } ^2_f + \kappa \norm{ v }^2_f, \\
\Rightarrow & \quad \beta \norm{\Pmat v}^2_f \leq (1-\kappa) \norm{ v }^2_f
\end{split}
\end{equation*}
and:
\begin{equation*}
\begin{split}
  \norm{\Bellman v_1 - \Bellman v_2 }^2_f  &= \norm{ \gamma \Pmat(v_1 - v_2) } ^2_f \\
  & = \gamma^2 \norm{ \Pmat (v_1 - v_2) }^2_f \\
  &\leq \frac{\gamma^2}{\beta} (1-\kappa) \norm{ v_1 - v_2 } ^2_f .
\end{split}
\end{equation*}
Hence, $T$ is a $\sqrt{\frac{\gamma^2}{\beta}(1-\kappa)}$-contraction. Since $\Pi_f$ is a non-expansion in the $f$-weighted norm \citep{BT96}, $\Pi_f T$ is a $\sqrt{\frac{\gamma^2}{\beta}(1-\kappa)}$-contraction as well.
\end{proof}

Recall that for the original ETD algorithm \citep{SuttonMW15}, we have that $\beta=\gamma$, and the contraction modulus is $\sqrt{\gamma(1-\kappa)}<1$, thus the contraction of $\Pi_f T$ always holds.

Also note that in the on-policy case, the behavior and target policies are equal, and according to Lemma \ref{lemma:kappa_bounds} we have $1-\kappa = \beta$. In this case, the contraction modulus in Theorem \ref{thm:one} is $\gamma$, similar to the result for on-policy TD \cite{BT96}.

We remark that \citet{kolter2011fixed} also used a measure of discrepancy between the behavior and the target policy to bound the TD-error. However, \citet{kolter2011fixed} considered the standard TD algorithm, for which a contraction could be guaranteed only for a class of behavior policies that satisfy a certain matrix inequality criterion. Our results show that for \ETDO\ with a suitable $\beta$, a contraction is guaranteed for \emph{general} behavior policies. We now show in an example that our contraction modulus bounds are tight.
\begin{Example}\label{example:tightness}
Consider an MDP with two states: Left and Right. In each state there are two identical actions leading to either Left or Right deterministically. The behavior policy will choose Right with probability $\epsilon$, and the target policy will choose Left with probability $\epsilon$, hence $1-\kappa \approx 1$. Calculating the quantities of interest:
\begin{equation*}
\begin{split}
\Pmat = \left( \begin{array}{cc}
\epsilon & 1-\epsilon  \\
\epsilon & 1-\epsilon
\end{array} \right)
, \quad \bdist = \left( 1-\epsilon, \epsilon \right) \\
f = \frac{1}{1-\beta} \left( 1 + 2 \epsilon \beta - \epsilon - \beta , -2 \epsilon \beta + \epsilon + \beta \right) \T.
\end{split}
\end{equation*}
So for $v = \left( 0, 1 \right) \T$:
\begin{equation*}
\norm{ v }^2_f = \frac{\epsilon + \beta  - 2\epsilon\beta}{1-\beta}, \quad \norm{ \Pmat v }^2_f = \frac{ (1-\epsilon)^2 }{1-\beta},
\end{equation*}
and for small $\epsilon$ we obtain that $\frac{\norm{ \gamma \Pmat v }^2}{\norm{ v }^2_f} \approx \frac{\gamma^2}{\beta} $.
\end{Example}

An immediate consequence of Theorem \ref{thm:one} is the following error bound, based on Lemma 6.9 of \citet{BT96}:
\begin{corollary}\label{corr:err}
We have
\begin{equation*}
\begin{split}
    \norm{ \Phi \T \theta^* - V^\tpol }_f & \leq \frac{1}{\sqrt{1 - \frac{\gamma^2}{\beta}(1-\kappa)}} \norm{ \Pi_f V^\tpol - V^\tpol }_f, \\
    \norm{ \Phi \T \theta^* - V^\tpol }_{d_\mu} &\leq \frac{1}{\sqrt{\gamma \left(1 - \frac{\gamma^2}{\beta}(1-\kappa) \right)}} \norm{ \Pi_f V^\tpol - V^\tpol }_f.
    \end{split}
\end{equation*}
\end{corollary}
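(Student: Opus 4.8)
The plan is to run the standard projected-fixed-point error analysis (the argument behind Lemma~6.9 of \citet{BT96}), now powered by the contraction established in Theorem~\ref{thm:one}. Write $\hat V \doteq \Phi\T\theta^*$; by \eqref{eq:fixed_point_eq} this is the fixed point of $\Pi_f\Bellman$, while $V^\tpol$ is the fixed point of $\Bellman$ (that is, $\Bellman V^\tpol = V^\tpol$). Let $\eta \doteq \sqrt{\tfrac{\gamma^2}{\beta}(1-\kappa)}$ denote the contraction modulus from Theorem~\ref{thm:one}, which is strictly below $1$ under the hypothesis $\beta>\gamma^2(1-\kappa)$.

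For the first inequality I would decompose the error as $\hat V - V^\tpol = (\hat V - \Pi_f V^\tpol) + (\Pi_f V^\tpol - V^\tpol)$. The second summand is the projection residual, which is $f$-orthogonal to the feature subspace, whereas the first summand lies in that subspace; hence the Pythagorean identity in the $f$-weighted inner product gives $\norm{\hat V - V^\tpol}_f^2 = \norm{\hat V - \Pi_f V^\tpol}_f^2 + \norm{\Pi_f V^\tpol - V^\tpol}_f^2$. To control the in-subspace term I would use $\hat V = \Pi_f\Bellman\hat V$ together with $\Pi_f V^\tpol = \Pi_f\Bellman V^\tpol$ (valid because $\Bellman V^\tpol = V^\tpol$), so that Theorem~\ref{thm:one} yields $\norm{\hat V - \Pi_f V^\tpol}_f = \norm{\Pi_f\Bellman\hat V - \Pi_f\Bellman V^\tpol}_f \le \eta\,\norm{\hat V - V^\tpol}_f$. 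Substituting into the Pythagorean identity and rearranging gives $(1-\eta^2)\norm{\hat V - V^\tpol}_f^2 \le \norm{\Pi_f V^\tpol - V^\tpol}_f^2$, and taking square roots produces the first bound.

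For the second inequality I would convert the left-hand norm from the $f$-weighting to the $\bdist$-weighting. Expanding $(I-\beta\Pmat)^{-1}=\sum_{k\ge 0}\beta^k\Pmat^k$ in the definition \eqref{eq:f} shows $f\ge\bdist$ componentwise, so $\norm{v}_{\bdist}\le\norm{v}_f$ for every $v$; combined with the first bound and $\gamma\in(0,1)$ this delivers the claimed $\bdist$-norm estimate, the precise prefactor $1/\sqrt{\gamma(1-\eta^2)}$ being what the corresponding form of Lemma~6.9 of \citet{BT96} records. The crux of the whole argument is the in-subspace contraction step: it rests on (i) the $f$-orthogonality of the projection residual $\Pi_f V^\tpol - V^\tpol$ (the defining property of $\Pi_f$) that licenses the Pythagorean split, and (ii) the substitution $\Pi_f V^\tpol=\Pi_f\Bellman V^\tpol$, which is exactly what lets Theorem~\ref{thm:one} act on the difference $\hat V-V^\tpol$. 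Once these two points are in place the remainder is bookkeeping; for the second bound the only thing to watch is the direction of the norm comparison.
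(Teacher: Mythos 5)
Your proof is correct and follows essentially the route the paper intends: for the first inequality, the Pythagorean decomposition $\norm{\hat V - V^\tpol}_f^2 = \norm{\hat V - \Pi_f V^\tpol}_f^2 + \norm{\Pi_f V^\tpol - V^\tpol}_f^2$ combined with $\Pi_f V^\tpol = \Pi_f\Bellman V^\tpol$ and the modulus $\eta=\sqrt{\gamma^2(1-\kappa)/\beta}$ from Theorem~\ref{thm:one} is precisely the argument behind Lemma~6.9 of \citet{BT96}, which is all the paper cites (it gives no further detail). The only implicit hypothesis you correctly surface is $\beta>\gamma^2(1-\kappa)$, so that $\eta<1$ and the denominators are real and positive.

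For the second inequality the paper again shows no derivation, and here your route is worth a remark: the componentwise bound $f\T=\bdist\T\sum_{k\ge 0}\beta^k\Pmat^k\ge\bdist\T$ gives $\norm{v}_{\bdist}\le\norm{v}_f$, so chaining with the first inequality yields the constant $1/\sqrt{1-\eta^2}$, which is \emph{strictly smaller} than the stated $1/\sqrt{\gamma(1-\eta^2)}$ whenever $\gamma<1$. Your argument therefore proves the stated bound a fortiori; the extra $1/\sqrt{\gamma}$ in the corollary is slack under your derivation (it would arise naturally from a different route, e.g.\ bounding $\norm{\cdot}_{\bdist}$ via the intermediate inequality $\norm{v}_{\bdist}^2\le\norm{v}_f^2-\beta\norm{\Pmat v}_f^2$ used in the proof of Theorem~\ref{thm:one}, but nothing in the corollary requires that). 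Both steps of your proposal are sound as written.
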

Up to the weights in the norm, the error $\norm{ \Pi_f V^\tpol - V^\tpol }_f$ is the best approximation we can hope for, within the capability of the linear approximation architecture. Corollary \ref{corr:err} guarantees that we are not too far away from it.

Notice that the error $\norm{ \Phi \T \theta^* - V^\tpol }_{d_\mu}$ uses a measure $d_\mu$ which is independent of the target policy; This could be useful in further analysis of a policy iteration algorithm, which iteratively improves the target policy using samples from a single behavior policy. Such an analysis may proceed similarly to that in \citet{munos2003error} for the on-policy case.

\subsection{Numerical Illustration}\label{ssec:example_Kolter1}
We illustrate the importance of the \ETDO\ bias bound in a numerical example. Consider the 2-state MDP example of \citet{kolter2011fixed}, with transition matrix $P=(1/2) \underline{\underline{\textbf{1}}}$ (where $\underline{\underline{\textbf{1}}}$ is an all $1$ matrix), discount factor $\gamma=0.99$, and value function $V=[1, 1.05]\T$ (with $R=(I-\gamma P)V$). The features are $\Phi = [1, 1.05+\epsilon]\T$, with $\epsilon=0.001$. Clearly, in this example we have $\tdist=[0.5,0.5]$. The behavior policy is chosen such that $\bdist = [p, 1-p]$.

In Figure \ref{fig:off_policy_bias} we plot the mean-squared error $\norm{ \Phi \T \theta^* - V^\tpol }_{\tdist}$, where $\theta^*$ is either the fixed point of the standard TD equation $V = \Pi_{\bdist} \Bellman V$, or the \ETDO\ fixed point of \eqref{eq:fixed_point_eq}, with $\beta=\gamma$. We also show the optimal error $\norm{ \Pi_{\tdist}V - V^\tpol }_{\tdist}$ achievable with these features. Note that, as observed by \citet{kolter2011fixed}, for certain behavior policies the bias of standard TD is infinite. This means that algorithms that converge to this fixed point, such as the GTD algorithm \citep{sutton2009fast}, are hopeless in such cases. The ETD algorithm, on the other hand, has a bounded bias \emph{for all behavior policies}.

\begin{figure}
\includegraphics[width=0.5\textwidth]{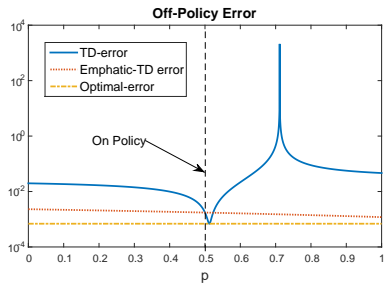}
\caption{\label{fig:off_policy_bias}Mean squared error in value function approximation for different behavior policies.}
\end{figure}
\section{The Bias-Variance Trade-Off of \ETDO}

From the results in Corollary \ref{corr:err}, it is clear that increasing the decay rate $\beta$ decreases the bias bound. Indeed, for the case $\beta=1$ we obtain the importance sampling TD algorithm \citep{precup2001off}, which is known to have a bias bound similar to on-policy TD. However, as recognized by \citet{precup2001off} and \citet{SuttonMW15}, the importance sampling ratio $F_t$ suffers from a high variance, which increases with $\beta$. The quantity $F_t$ is important as it appears as a multiplicative factor in the definition of the ETD learning rule, so its amplitude directly impacts the stability of the algorithm. In fact, the asymptotic variance of $F_t$ may be infinite, as we show in the following example:

\begin{Example}\label{example:inf_variance}
Consider the same MDP given in Example \ref{example:tightness}, only now the behavior policy chooses Left or Right with probability $0.5$, and the target policy chooses always Right. For \ETDO\ with $\beta\in [0,1)$, we have that when $S_t = \text{Left}$ then $F_t=1$ (since $\rho_{t-1} = 0$). When $S_t= \text{Right}$, $F_t$ may take several values depending on how many steps, $\tau(t)$, was the last transition from Left to Right, i.e.~$\tau(t)\eqdef\min \{i\geq 0: S_{t-i}=Left \}$. We can write this value as $F^{\tau(t)}$ where:
$$F^{\tau} \doteq \sum_{i=0}^{\tau} (2\beta)^i = \frac{(2\beta)^{\tau+1}-1}{2\beta-1},$$
if $2\beta\neq 1$. Let us assume that $2\beta > 1$ since interesting cases happen when $\beta$ is close to 1.

Let's compute $F_t$'s average over time: Following the stationary distribution of the behavior policy, $S_t=\text{Left}$ with probability $1/2$. Now, conditioned on $S_t=\text{Right}$ (which happens with probability $1/2$), we have $\tau(t)=i$ with probability $2^{-i-1}$. Thus the average (over time) value of $F_t$ is


\begin{equation*}
\E F_t = \frac{1}{2} \sum_{i=0}^{\infty} 2^{-i-1} F^i = \frac{\sum_i \beta^{i+1}-1}{2(2\beta-1)}  = \frac{1}{2(1-\beta)}.
\end{equation*}

Thus $F_t$ amplifies the TD update by a factor of $\frac{1}{2(1-\beta)}$ in average. Unfortunately, the actual values of the (random variable) $F_t$ does not concentrate around its expectation, and actually $F_t$ does not even have a finite variance. Indeed the average (over time) of $F_t^2$ is

\begin{equation*}
\E F_t^2 = \frac{1}{4} \sum_{i=0}^{\infty} 2^{-i} (F^i)^2 = \frac{\sum_{i} 2^{-i} \big( (2\beta)^{i+1}-1 \big)^2}{4(2\beta-1)^2}  = \infty,
\end{equation*}

as soon as $2\beta^2\geq 1$.
\end{Example}

So although \ETDO\ converges almost surely (as shown by \citealt{yu2015etd}), the variance of the estimate may be infinite, which suggests a prohibitively slow convergence rate.

In the following proposition we characterize the dependence of the variance of $F_t$ on $\beta$.
\begin{Proposition}\label{prop:variance}
Define the mismatch  matrix $\tilde{P}_{\mu,\pi}$ such that $[\tilde{P}_{\mu,\pi}]_{\bar{s} s} = \sum_a p(s|\bar{s}, \bar{a}) \frac{\pi^2(a | \bar{s})}{\mu(a | \bar{s})}$ and write $\alpha(\mu,\pi)$ the largest magnitude of its eigenvalues. Then for any $\beta < 1/\sqrt{\alpha(\mu,\pi)}$ the average variance of $F_t$ (conditioned on any state) is finite, and 
$$ \E_\mu \left[ \var[F_t|S_t = s] \right] \leq \frac{\beta^2}{1-\beta} \left( 2 + \frac{ (1 + \beta)\norm{\tilde{P}_{\mu,\pi}}_\infty}{1 - \beta^2 \norm{\tilde{P}_{\mu,\pi}}_\infty}\right), $$ 
where $\norm{\tilde{P}_{\mu,\pi}}_\infty$ is the $l_\infty$-induced norm which is the maximum absolute row sum of the matrix. 

\end{Proposition}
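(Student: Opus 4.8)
The plan is to reduce the conditional variance to a pair of \emph{forward} recursions for the state-weighted first and second moments of $F_t$ in steady state, solve them in closed form, and then bound the solution using only nonnegativity of the matrices involved. Write $g(s) = \Exp{F_t \given S_t = s}$ and $h(s) = \Exp{F_t^2 \given S_t = s}$ for the stationary conditional moments, and set $M(s) = \Exp{F_t^2 \indic{S_t=s}} = \bdist(s)\,h(s)$. I will show that the state-weighted first moment $\Exp{F_t \indic{S_t=s}}$ is exactly the emphatic weight $f(s)$ of \eqref{eq:f}, so that $g(s)=f(s)/\bdist(s)$. Then, by the law of total variance, the target quantity is
\begin{equation*}
\E_\mu\!\left[\var[F_t \given S_t]\right] = \E_\mu\!\left[F_t^2\right] - \E_\mu\!\left[g(S_t)^2\right] = \mathbf{1}\T M - \sum_s \frac{f(s)^2}{\bdist(s)}.
\end{equation*}

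First I would establish the two moment recursions. Squaring $F_t = 1 + \beta \rho_{t-1} F_{t-1}$ and taking $\Exp{\,\cdot\,\indic{S_t=s}}$, the key observation is that, conditioned on $S_{t-1}=\bar s$, the trace $F_{t-1}$ is measurable with respect to the past and hence independent of the one-step transition $(a_{t-1},S_t)$. The cross terms therefore factor through the two identities $\Exp{\rho_{t-1}\indic{S_t=s} \given S_{t-1}=\bar s} = \sum_a \tpol(a|\bar s)\,p(s|\bar s,a) = \Pmat(s|\bar s)$ (the \emph{target} transition, since $\E_\mu[\rho \given \bar s]=1$) and $\Exp{\rho_{t-1}^2\indic{S_t=s} \given S_{t-1}=\bar s} = [\tilde P_{\mu,\pi}]_{\bar s s}$. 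In steady state this gives $f\T = \bdist\T + \beta f\T \Pmat$, i.e.\ $f\T = \bdist\T(I-\beta\Pmat)^{-1}$, recovering \eqref{eq:f}, together with $M\T = \bdist\T + 2\beta f\T \Pmat + \beta^2 M\T \tilde P_{\mu,\pi}$, whose solution is $M\T = (\bdist\T + 2\beta f\T \Pmat)(I - \beta^2 \tilde P_{\mu,\pi})^{-1}$. This inverse exists precisely when the spectral radius of $\beta^2 \tilde P_{\mu,\pi}$ is below one, i.e.\ $\beta < 1/\sqrt{\alpha(\mu,\pi)}$, which is the claimed finiteness condition.

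It then remains to bound $\mathbf{1}\T M$ from above and $\sum_s f(s)^2/\bdist(s)$ from below. For the first, since $\tilde P_{\mu,\pi}\ge 0$ entrywise its powers satisfy $\tilde P_{\mu,\pi}^{\,k}\, \mathbf{1} \le \norm{\tilde P_{\mu,\pi}}_\infty^{\,k}\, \mathbf{1}$, so $(I-\beta^2\tilde P_{\mu,\pi})^{-1}\mathbf{1} \le (1-\beta^2\norm{\tilde P_{\mu,\pi}}_\infty)^{-1}\mathbf{1}$; combined with $\bdist\T \mathbf{1} = 1$, $\Pmat \mathbf{1} = \mathbf{1}$, and $f\T\mathbf{1} = (1-\beta)^{-1}$ (from $(I-\beta\Pmat)^{-1}\mathbf{1} = (1-\beta)^{-1}\mathbf{1}$), this yields $\mathbf{1}\T M \le \frac{1+\beta}{(1-\beta)(1-\beta^2\norm{\tilde P_{\mu,\pi}}_\infty)}$. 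For the second, truncating the Neumann series gives $f \ge \bdist + \beta \Pmat\T \bdist$ entrywise, so by $(a+b)^2\ge a^2+2ab$ we get $f(s)^2/\bdist(s) \ge \bdist(s) + 2\beta(\Pmat\T\bdist)(s)$ and hence $\sum_s f(s)^2/\bdist(s) \ge 1 + 2\beta$. Subtracting, the rational expression $\frac{1+\beta}{(1-\beta)(1-\beta^2\norm{\tilde P_{\mu,\pi}}_\infty)} - (1+2\beta)$ simplifies exactly to the stated bound $\frac{\beta^2}{1-\beta}\big(2 + \frac{(1+\beta)\norm{\tilde P_{\mu,\pi}}_\infty}{1-\beta^2\norm{\tilde P_{\mu,\pi}}_\infty}\big)$; a slightly sharper constant is in fact available by using Jensen's inequality $\sum_s f(s)^2/\bdist(s)\ge (\sum_s f(s))^2 = (1-\beta)^{-2}$ in place of $1+2\beta$.

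The main obstacle I anticipate is not the algebra but the rigorous justification of the moment recursions: one must argue that the stationary conditional moments are well defined and that the time-$t$ iterates converge to them, which is exactly where $\beta<1/\sqrt{\alpha(\mu,\pi)}$ enters, guaranteeing that the second-moment iteration $M_t\T = \bdist\T + 2\beta f\T\Pmat + \beta^2 M_{t-1}\T \tilde P_{\mu,\pi}$ contracts and has a finite fixed point. One must also carefully verify the conditional-independence factorization that produces $\Pmat$ and $\tilde P_{\mu,\pi}$. Everything after the recursions is monotone nonnegative-matrix manipulation together with a single use of $(a+b)^2 \ge a^2 + 2ab$.
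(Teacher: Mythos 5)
Your proposal is correct and follows essentially the same route as the paper: the same stationary moment recursions $f\T = \bdist\T + \beta f\T \Pmat$ and $q\T = \bdist\T + 2\beta f\T \Pmat + \beta^2 q\T \tilde P_{\mu,\pi}$ (your $M$ is the paper's $q$), the same variance decomposition $q\T\mathbf{1} - f\T D_\mu^{-1} f$, the same truncation $f \ge \bdist + \beta \Pmat\T\bdist$ with the squared cross term dropped, and a Neumann-series bound on $(I-\beta^2\tilde P_{\mu,\pi})^{-1}$ that is algebraically identical to the paper's use of submultiplicativity of the $\ell_\infty$-induced norm. The only substantive differences are presentational (you solve the recursion in closed form before bounding, and note the sharper Jensen lower bound $\sum_s f(s)^2/\bdist(s) \ge (1-\beta)^{-2}$), and you share with the paper the implicit requirement $\beta^2\norm{\tilde P_{\mu,\pi}}_\infty < 1$ for the final bound to be meaningful.
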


\begin{proof}(Partial) 
Following the same derivation that \citet{SuttonMW15} used to prove that $f(s)=d_\mu(s) \lim_{t\rightarrow \infty } \E [F_t | S_t = s]$, we have
\begin{equation*}
\begin{split}
	q(s) & \doteq d_\mu (s) \lim_{t\rightarrow \infty } \E [F^2_t | S_t = s] \\
	& = d_\mu (s) \lim_{t\rightarrow \infty } \E [ (1 + \rho_{t-1} \beta F_{t-1})^2 | S_t = s] \\
	&= d_\mu (s) \lim_{t\rightarrow \infty } \E [ 1 + 2\rho_{t-1} \beta F_{t-1} + \rho^2_{t-1} \beta^2 F^2_{t-1} | S_t = s].
\end{split}	
\end{equation*}
For the first summand, we get $d_\mu (s)$. For the second summand, we get:	
\begin{equation*}
	2\beta d_\mu (s) \lim_{t\rightarrow \infty } \E [ \rho_{t-1} F_{t-1} | S_t = s] = 2 \beta \sum_{\bar{s}} [P_\pi]_{\bar{s} s} f(\bar{s}).
\end{equation*}
The third summand equals
\begin{equation*}
\begin{split}
	& \beta^2 \sum_{\bar{s}, \bar{a}} d_\mu(\bar{s}) \mu(\bar{a} | \bar{s}) p(s|\bar{s}, \bar{a})\frac{\pi^2(\bar{a} | \bar{s})}{\mu^2(\bar{a} | \bar{s})}\lim_{t\rightarrow\infty} \E [F^2_{t-1}|S_{t-1} = \bar{s}] \\
	&= \beta^2 \sum_{\bar{s}, \bar{a}} p(s|\bar{s}, \bar{a}) \frac{\pi^2(\bar{a} | \bar{s})}{\mu(\bar{a} | \bar{s})} q(\bar{s}) 
	= \beta^2 \sum_{\bar{s}} [\tilde{P}_{\mu,\pi}]_{\bar{s} s} q(\bar{s}).
\end{split}	
\end{equation*}
Hence $q = d_\mu + 2\beta P^\top_\pi f + \beta^2 \tilde{P}^\top_{\mu,\pi} q$.
Thus for any $\beta < 1/\sqrt{\alpha(\mu,\pi)}$, all eigenvalues of the matrix $\beta^2 \tilde{P}^\top_{\mu,\pi}$ have magnitude smaller than  $1$, and the vector $q$ has finite components. The rest of the proof is very technical and is given in Lemma \ref{lem:var-bound} in the supplementary material. 

\end{proof}

Proposition \ref{prop:variance} and Corollary \ref{corr:err} show that the decay rate $\beta$ \emph{acts as an implicit trade-off parameter} between the bias and variance in ETD. For large $\beta$, we have a low bias but suffer from a high  variance (possibly infinite if $\beta \geq 1/\sqrt{\lambda(\mu,\pi)}$), and vice versa for small $\beta$. Notice that for the on-policy case, $\lambda(\mu,\pi)=1$ thus for any $\beta < 1$ the variance is finite.

Originally, \ETDO\ was introduced with $\beta=\gamma$, and from our perspective, it may be seen as a specific choice for the bias-variance trade-off. However, there is no intrinsic reason to choose $\beta=\gamma$, and other choices may be preferred in practice, depending on the nature of the problem. In the following numerical example, we investigate the bias-variance dependence on $\beta$, and show that the optimal $\beta$ in term of mean-squared error may be quite different from $\gamma$.

\subsection{Numerical Illustration}
We revisit the 2-state MDP described in Section \ref{ssec:example_Kolter1}, with $\gamma=0.9$,  $\epsilon=0.2$ and $p=0.95$. For these parameter settings, the error of standard TD is $42.55$ ($p$ was chosen to be close to a point of infinite bias for these parameters).

In Figure \ref{fig:bias_variance} we plot the mean-squared error $\norm{ \Phi \T \theta^* - V^\tpol}_{\tdist}$, where $\theta^*$ was obtained by running \ETDO\ with a step size $\alpha=0.001$ for $10,000$ iterations, and averaging the results over $10,000$ different runs.

First of all, note that for all $\beta$, the error is smaller by two orders of magnitude than that of standard TD. Thus, algorithms that converge to the standard TD fixed point such as GTD \cite{sutton2009fast} are significantly outperformed by \ETDO\ in this case. Second, note the dependence of the error on $\beta$, demonstrating the bias-variance trade-off discussed above. Finally, note that the minimal error is obtained for $\gamma=0.8$, and is considerably smaller than that of the original ETD with $\beta=\gamma=0.9$.

\begin{figure}
\includegraphics[width=0.5\textwidth]{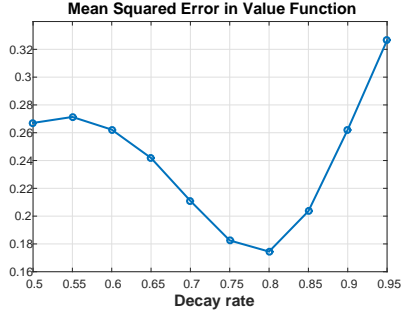}
\caption{\label{fig:bias_variance}Mean squared error in value function approximation for different decay rates $\beta$.}
\end{figure}

\section{Contraction Property for \ETD}

We now extend our results to incorporate eligibility traces, in the style of the ETD($\lambda$) algorithm \citep{SuttonMW15}, and show similar contraction properties and error bounds.

The \ETD\ algorithm iteratively updates the weight vector $\theta$ according to
\begin{equation*}
\begin{split}
\theta_{t+1} &:= \theta_t + \alpha (R_{t+1} + \gamma \theta_t \T \phi_{t+1} - \theta_t \T \phi_t) e_t \\
e_t &= \rho_t (\gamma \lambda e_{t-1} + M_t \phi_t ), \quad e_{-1} = 0 \\
M_t &= \lambda + (1-\lambda)F_t \\
F_t &= \beta \rho_{t-1} F_{t-1} + 1, \quad F_0 = 1,
\end{split}
\end{equation*}
where $e_t$ is the eligibility trace \citep{SuttonMW15}.
In this case, we define the emphatic weight vector $m$ by
\begin{equation}\label{eq:m}
m \T = \bdist\T (I - P^{\lambda, \beta})^{-1},
\end{equation}
where $P^{a, b}$ for some $a,b\in \R$ denotes the following matrix:
\begin{equation*}
\begin{split}
P^{a, b} &= I - (I-b a \Pmat)^{-1} (I-b \Pmat).
\end{split}
\end{equation*}
The Bellman operator for general $\lambda$ and $\gamma$ is given by:
\begin{equation*}
T^{(\lambda)} (V) = (I - \gamma \lambda \Pmat)^{-1}\Rvec + P^{\lambda, \gamma}V, \quad V\in \R^{|S|}.
\end{equation*}
For $\lambda=0$ we have $P^{\lambda, \beta}_\tpol=\beta P$, $P^{\lambda, \gamma}_\tpol=\gamma P$, and $m=f$ so we recover the definitions of \ETDO.

Recall that our goal is to estimate the value function $V^\tpol$. Thus, we would like to know how well the \ETD\ solution approximates $V^\tpol$. \citet{Mahmood2015ETD} show that, under suitable step-size conditions, ETD converges to some $\theta^*$ that is a solution of the \emph{projected fixed-point equation}:
\begin{equation*}
    \theta \T \Phi  = \Pi_m T^{(\lambda)} (\theta \T \Phi).
\end{equation*}
In their analysis, however, \citet{Mahmood2015ETD} did not show how well the solution $\Phi \T \theta^*$ approximates $V^\tpol$. Next, we establish that the projected Bellman operator $\Pi_m T^{(\lambda)}$ is a contraction. This result will then allow us to bound the error $\norm{ \Phi \T \theta^* - V^\tpol }_m$.

\begin{Theorem}\label{thm:two}
	$\Pi_m T^{(\lambda)}$ is an $\omega$-contraction with respect to the Euclidean $m$-weighted norm where:
	\begin{equation}
	\begin{split}
		\beta \geq \gamma : \quad \omega & = \sqrt{ \frac{\gamma^2 (1 + \lambda\beta )^2 (1-\lambda) }{\beta (1+\gamma \lambda)^2 (1-\lambda \beta)} } , \\
			\beta \leq \gamma : \quad \omega & = \sqrt{\frac{\gamma^2 (1-\beta \lambda ) (1-\lambda)}{\beta (1-\gamma \lambda)^2}} .
	\end{split}
	\end{equation}
\end{Theorem}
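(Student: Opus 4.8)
The plan is to reduce the claim, exactly as in the proof of Theorem~\ref{thm:one}, to a bound on the ``transition part'' of $T^{(\lambda)}$. Since $\Pi_m$ is a non-expansion in the $m$-weighted norm and $T^{(\lambda)}v_1 - T^{(\lambda)}v_2 = P^{\lambda,\gamma}(v_1-v_2)$, it suffices to prove $\norm{P^{\lambda,\gamma} v}_m \le \omega \norm{v}_m$ for every $v$. The first step is to expose the structure of $P^{\lambda,\gamma}$: a short computation gives $P^{\lambda,\gamma} = \gamma(1-\lambda)(I-\gamma\lambda P)^{-1}P$, a \emph{nonnegative} matrix with constant row sum $c_\gamma := \gamma(1-\lambda)/(1-\gamma\lambda)$, so that $P^{\lambda,\gamma} = c_\gamma \hat P_\gamma$ with $\hat P_\gamma$ stochastic, and likewise $P^{\lambda,\beta} = c_\beta \hat P_\beta$ with $c_\beta := \beta(1-\lambda)/(1-\beta\lambda)$. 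Expanding the resolvent shows $\hat P_\gamma = (1-\gamma\lambda)\sum_{k\ge 0}(\gamma\lambda)^k P^{k+1}$ is a geometric mixture of powers of the (stochastic) target transition matrix $P$, and similarly for $\hat P_\beta$. These observations let me reuse the two ingredients of Theorem~\ref{thm:one} --- Jensen's inequality and the defining identity for the emphatic weights.

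Second, I would establish the analogue of the core inequality of Theorem~\ref{thm:one}. The same convexity step applied to the stochastic matrix $\hat P_\beta$ and weights $m$ gives $\norm{\hat P_\beta v}_m^2 \le v^\top \mathrm{diag}(m^\top \hat P_\beta) v$, and the definition $m^\top = d_\mu^\top(I - P^{\lambda,\beta})^{-1}$ yields $m^\top P^{\lambda,\beta} = (m - d_\mu)^\top$, i.e. $m^\top \hat P_\beta = c_\beta^{-1}(m-d_\mu)^\top$. Combining these exactly as before gives $\norm{v}_m^2 - c_\beta\norm{\hat P_\beta v}_m^2 \ge \norm{v}_{d_\mu}^2 \ge 0$. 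Unwinding the mixture, this is equivalent to the entrywise relation $\sum_{k\ge0}(\beta\lambda)^k m^\top P^{k+1} = \tfrac{1}{\beta(1-\lambda)}(m-d_\mu)^\top \le \tfrac{1}{\beta(1-\lambda)}m^\top$, which is the quantitative control I will feed into the final step, together with the operator bound $\norm{\hat P_\beta v}_m^2 \le c_\beta^{-1}\norm{v}_m^2$.

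Third comes the comparison between the two decay rates, which is where the case split and the main difficulty lie. Using Jensen on $\hat P_\gamma$, $\norm{P^{\lambda,\gamma}v}_m^2 = c_\gamma^2\norm{\hat P_\gamma v}_m^2 \le c_\gamma^2\, v^\top\mathrm{diag}(m^\top\hat P_\gamma)v$, so everything reduces to bounding $m^\top\hat P_\gamma = (1-\gamma\lambda)\sum_k(\gamma\lambda)^k m^\top P^{k+1}$ by a multiple of $m^\top$. When $\beta \ge \gamma$ we have $(\gamma\lambda)^k \le (\beta\lambda)^k$ termwise, so the $\gamma$-mixture is dominated by the $\beta$-mixture of Step~2 and the desired inequality follows at once; recovering the precise factors $(1+\lambda\beta)^2/[(1+\gamma\lambda)^2(1-\lambda\beta)]$ is then routine bookkeeping. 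When $\beta \le \gamma$, however, the weights satisfy $(\gamma\lambda)^k \ge (\beta\lambda)^k$, so termwise domination \emph{fails}. The clean target here is $\omega^2 = c_\gamma^2/c_\beta$, which one checks equals the stated modulus for $\beta\le\gamma$; I would obtain it by re-expanding $(I-\gamma\lambda P)^{-1}$ as a nonnegative Neumann series in $(I-\beta\lambda P)^{-1}P$ (legitimate since $\gamma\lambda<1$), thereby writing $\hat P_\gamma$ as a nonnegative combination of iterates of $\hat P_\beta$, and then applying $\norm{\hat P_\beta v}_m^2 \le c_\beta^{-1}\norm{v}_m^2$ and summing the resulting geometric series.

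I expect the $\beta\le\gamma$ comparison to be the main obstacle: unlike the $\beta\ge\gamma$ case it cannot be settled by termwise domination, and one must exploit the joint constraint on $\{m^\top P^{k+1}\}_k$ coming from the definition of $m$ rather than crude per-term estimates, which diverge. Two sanity checks guide the computation: at $\lambda=0$ both branches collapse to $\gamma^2/\beta$ and recover Theorem~\ref{thm:one} with $\kappa=0$, and in the on-policy/$\beta=\gamma$ case $m\propto d_\pi$ and $\omega$ reduces to the familiar TD($\lambda$) modulus $c_\gamma = \gamma(1-\lambda)/(1-\gamma\lambda)$.
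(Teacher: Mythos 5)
Your reduction to bounding $\norm{P^{\lambda,\gamma}v}_m$, the factorization $P^{\lambda,\gamma}=\gamma(1-\lambda)(I-\gamma\lambda P)^{-1}P$ with constant row sum $\zeta_\gamma=\gamma(1-\lambda)/(1-\gamma\lambda)$, the Jensen step, and the identity $m^\top P^{\lambda,\beta}=m^\top-\bdist^\top$ all match the paper's argument, and your $\beta\ge\gamma$ branch (termwise domination of the nonnegative Neumann series $\sum_k(\gamma\lambda)^k m^\top P^{k+1}$ by the $\beta$-series) is sound --- it even yields the slightly sharper constant $\gamma^2(1-\lambda)/(\beta(1-\gamma\lambda))$, which implies the stated $\omega$. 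The real divergence from the paper, and the gap, is in the $\beta\le\gamma$ branch. Writing $\hat P_\gamma=\sum_{j\ge0}a_j\hat P_\beta^{\,j+1}$ with $a_j=(1-\gamma\lambda)\frac{((\gamma-\beta)\lambda)^j}{(1-\beta\lambda)^{j+1}}$ (a convex combination, $\sum_j a_j=1$) and then applying $\norm{\hat P_\beta v}_m\le c_\beta^{-1/2}\norm{v}_m$ iteratively gives, after the triangle inequality, the factor $\sum_j a_j c_\beta^{-(j+1)/2}$. Since $c_\beta=\beta(1-\lambda)/(1-\beta\lambda)<1$, each application of the operator bound \emph{expands} by $c_\beta^{-1/2}>1$, so this series converges only when $(\gamma-\beta)^2\lambda^2<\beta(1-\lambda)(1-\beta\lambda)$. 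That condition fails in regimes the theorem is supposed to cover: e.g.\ $\gamma=0.9$, $\beta=0.5$, $\lambda=0.99$ gives $\omega\approx0.83<1$ (a genuine contraction is claimed) while your geometric ratio is $\approx7.9$, so the bound is vacuous. Even when the series does converge, $\sum_j a_j c_\beta^{-(j+1)/2}>c_\beta^{-1/2}$ strictly (for $\gamma>\beta$, $\lambda>0$), so you land above your own target $\omega^2=c_\gamma^2/c_\beta$. In short, the tool you allow yourself --- the single-step operator bound on $\hat P_\beta$ applied power by power --- is exactly the "crude per-term estimate" you correctly warn against, and it cannot produce the stated modulus.

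What the paper does instead is prove the direct comparison $\norm{P^{\lambda,\gamma}v}_m^2\le\alpha^2\norm{P^{\lambda,\beta}v}_m^2$ with $\alpha=\frac{\gamma(1-\beta\lambda)}{\beta(1-\gamma\lambda)}$ (resp.\ $\frac{\gamma(1+\beta\lambda)}{\beta(1+\gamma\lambda)}$), by exploiting that $P^{\lambda,\beta}$ and $P^{\lambda,\gamma}$ are rational functions of the same matrix $P$: they share eigenvectors, their eigenvalues are $l^\beta_j=\frac{\beta t_j(1-\lambda)}{1-\beta\lambda t_j}$ and $l^\gamma_j=\frac{\gamma t_j(1-\lambda)}{1-\gamma\lambda t_j}$, and the ratio $|l^\gamma_j|^2/|l^\beta_j|^2$ is maximized over the unit disk at $t_j=\pm1$ (supplementary Lemma~\ref{lem:norm-ineq}); composing this with $\norm{P^{\lambda,\beta}v}_m^2\le\zeta\norm{v}_m^2$ gives both branches at once. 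To close your gap you need some version of this simultaneous spectral comparison (or another argument that uses the joint structure of the two resolvents), not an iterated one-step bound. Your sanity checks pass precisely because they sit at $\lambda=0$ or $\beta=\gamma$, where the problematic series degenerates to a single term.
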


\begin{proof}(sketch)
	The proof is almost identical to the proof of Theorem $\ref{thm:one}$, only now we cannot apply Jensen's inequality directly, since the rows of $P^{\lambda, \beta}$ do not sum to $1$. However:
	\begin{equation*}
	P^{\lambda, \beta} \textbf{1} = \left( I - (I-\beta \lambda \Pmat)^{-1} (I-\beta \Pmat) \right) \textbf{1} = \zeta \textbf{1},
	\end{equation*}
	where $\zeta = \frac{\beta (1-\lambda)}{1-\lambda \beta}$. Notice that each entry of $P^{\lambda, \beta}$ is positive. Therefore $\frac{P^{\lambda, \beta}}{\zeta}$ will hold for Jensen's inequality.
	Let $M = diag(m)$, we have
	\begin{equation*}
	\begin{split}
	\norm{ v }^2_m - \frac{1}{\zeta} \norm{ P^{\lambda, \beta} v }^2_m &= v^\top M v - \zeta v^\top \frac{P^{\lambda, \beta}}{\zeta}^\top M \frac{P^{\lambda, \beta}}{\zeta} v  \\
	&\geq^{(a)} v^\top M v - \beta v^\top diag(m^\top \frac{P^{\lambda, \beta}}{\zeta})v \\
	& = v^\top [M -  diag(m^\top P^{\lambda, \beta})]v \\
	& = v^\top \left[diag \left(m^\top (I- P^{\lambda, \beta}) \right)  \right] v \\
	&=^{(b)} v^\top diag(d_\mu) v = \norm{ v }^2_{d_\mu},
	\end{split}
	\end{equation*}
	where (a) follows from the Jensen inequality and (b) from Equation (\ref{eq:m}). Therefore:
	\begin{equation*}
	\norm{ v }^2_m \geq \frac{1}{\zeta} \norm{ P^{\lambda, \beta} v } ^2_m + \norm{ v }^2_{d_\mu} \geq \frac{1}{\zeta} \norm{ P^{\lambda, \beta} v } ^2_m,
	\end{equation*}
	and:
	\begin{equation*}
	\begin{split}
	\norm{T^{(\lambda)} v_1 - T^{(\lambda)} v_2 }^2_m  &= \norm{ P^{\lambda,\gamma} (v_1 - v_2) } ^2_m  \\
	\text{(Case A: } \quad \beta \geq \gamma \text{)} & \leq \norm{\frac{\gamma (1+\beta \lambda)}{\beta ( 1+\gamma \lambda)}  P^{\lambda, \beta} (v_1 - v_2) } ^2_m \\
		& \leq \frac{\gamma^2 (1 + \lambda\beta )^2 (1-\lambda) }{\beta (1+\gamma \lambda)^2 (1-\lambda \beta)} \norm{ v_1 - v_2 } ^2_m, \\
	\text{(Case B: } \quad \beta \leq \gamma \text{)} & \leq \norm{\frac{\gamma (1-\beta \lambda)}{\beta ( 1-\gamma \lambda)}  P^{\lambda, \beta} (v_1 - v_2) } ^2_m \\
	& \leq \frac{\gamma^2 (1-\beta \lambda ) (1-\lambda)}{\beta (1-\gamma \lambda)^2} \norm{ v_1 - v_2 } ^2_m .
	\end{split}
	\end{equation*}
	The inequalities depending on the two cases originate from the fact that the two matrices $ P^{\lambda, \beta},  P^{\lambda, \gamma}$ are polynomials of the same matrix $P_\tpol$, and mathematical manipulation on the corresponding eigenvalues decomposition of $(v_1-v_2)$. The details are given in Lemma~\ref{lem:norm-ineq} of the supplementary material.

	Now, for a proper choice of $\beta$, the operator $T^{(\lambda)}$ is a contraction, and since $\Pi_m$ is a non-expansion in the $m$-weighted norm, $\Pi_m T^{(\lambda)}$ is a contraction as well.
\end{proof}

In Figure \ref{fig:contraction} we illustrate the dependence of the contraction moduli bound on $\lambda$ and $\beta$. In particular, for $\lambda \to 1$, the contraction modulus diminishes to 0. Thus, for large enough $\lambda$, a contraction can always be guaranteed (this can also be shown mathematically from the contraction results of Theorem \ref{thm:two}). We remark that a similar result for standard TD($\lambda$) was established by \citealt{yu2012least}. However, as is well-known \citep{Ber2012DynamicProgramming}, increasing $\lambda$ also increases the variance of the algorithm, and we therefore obtain a bias-variance trade-off in $\lambda$ as well as $\beta$.
Finally, note that for $\beta = \gamma$, the contraction modulus equals $\sqrt{\frac{\gamma(1-\lambda)}{1-\gamma\lambda}}$, and that for $\lambda=0$ the result is the same as in Theorem \ref{thm:one}.
\begin{figure}
	\includegraphics[width=0.5\textwidth]{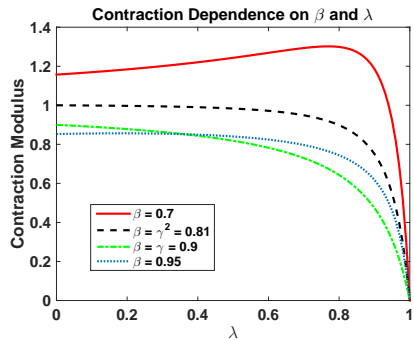}
	\caption{\label{fig:contraction}Contraction moduli of $\Pi_m T^{(\lambda)}$ for different $\beta$'s, as a function of the bootstrapping parameter $\lambda$. Notice that we see a steep decrease in the moduli only for $\lambda$ close to 1. }
	\end{figure}
	
%

\section{Conclusion}
In this work we unified several off-policy TD algorithms under the \ETD\ framework, which flexibly manages the bias and variance of the algorithm by controlling the decay-rate of the importance-sampling ratio. From this perspective, we showed that several different methods proposed in the literature are special instances of this bias-variance selection.

Our main contribution is an error analysis of \ETD\ that quantifies the bias-variance trade-off. In particular, we showed that the recently proposed ETD algorithm of \citet{SuttonMW15} has bounded bias for \emph{general} behavior and target policies, and that by controlling the decay-rate in the \ETD\ algorithm, an improved performance may be obtained by reducing the variance of the algorithm while still maintaining a reasonable bias.

Possible future extensions of our work includes finite-time bounds for off-policy \ETD, an error propagation analysis of off-policy \emph{policy improvement}, and solving the bias-variance trade-off adaptively from data.


\bibliographystyle{aaai}
\bibliography{ContractionBib}

\begin{thebibliography}{}

\bibitem[\protect\citeauthoryear{Baird}{1995}]{baird1995residual}
Baird, L.
\newblock 1995.
\newblock Residual algorithms: Reinforcement learning with function
  approximation.
\newblock In {\em ICML}.

\bibitem[\protect\citeauthoryear{Bertsekas and Tsitsiklis}{1996}]{BT96}
Bertsekas, D., and Tsitsiklis, J.
\newblock 1996.
\newblock {\em Neuro-Dynamic Programming}.
\newblock Athena Scientific.

\bibitem[\protect\citeauthoryear{Bertsekas and
  Yu}{2009}]{bertsekas2009projected}
Bertsekas, D., and Yu, H.
\newblock 2009.
\newblock Projected equation methods for approximate solution of large linear
  systems.
\newblock {\em Journal of Computational and Applied Mathematics} 227(1):27--50.

\bibitem[\protect\citeauthoryear{Bertsekas}{2012}]{Ber2012DynamicProgramming}
Bertsekas, D.
\newblock 2012.
\newblock {\em Dynamic Programming and Optimal Control, Vol II}.
\newblock Athena Scientific, 4th edition.

\bibitem[\protect\citeauthoryear{Kolter}{2011}]{kolter2011fixed}
Kolter, J.~Z.
\newblock 2011.
\newblock The fixed points of off-policy {TD}.
\newblock In {\em NIPS}.

\bibitem[\protect\citeauthoryear{Liu \bgroup et al\mbox.\egroup
  }{2015}]{liu2015finite}
Liu, B.; Liu, J.; Ghavamzadeh, M.; Mahadevan, S.; and Petrik, M.
\newblock 2015.
\newblock Finite-sample analysis of proximal gradient td algorithms.
\newblock In {\em UAI}.

\bibitem[\protect\citeauthoryear{{Mahmood} \bgroup et al\mbox.\egroup
  }{2015}]{Mahmood2015ETD}
{Mahmood}, A.~R.; {Yu}, H.; {White}, M.; and {Sutton}, R.~S.
\newblock 2015.
\newblock {Emphatic Temporal-Difference Learning}.
\newblock {\em arXiv:1507.01569}.

\bibitem[\protect\citeauthoryear{Munos}{2003}]{munos2003error}
Munos, R.
\newblock 2003.
\newblock Error bounds for approximate policy iteration.
\newblock In {\em ICML}.

\bibitem[\protect\citeauthoryear{Precup, Sutton, and
  Dasgupta}{2001}]{precup2001off}
Precup, D.; Sutton, R.~S.; and Dasgupta, S.
\newblock 2001.
\newblock Off-policy temporal-difference learning with function approximation.
\newblock In {\em ICML}.

\bibitem[\protect\citeauthoryear{Sutton and
  Barto}{1998}]{sutton_reinforcement_1998}
Sutton, R.~S., and Barto, A.
\newblock 1998.
\newblock {\em Reinforcement learning: An introduction}.
\newblock Cambridge Univ Press.

\bibitem[\protect\citeauthoryear{Sutton \bgroup et al\mbox.\egroup
  }{2009}]{sutton2009fast}
Sutton, R.~S.; Maei, H.~R.; Precup, D.; Bhatnagar, S.; Silver, D.;
  Szepesv{\'a}ri, C.; and Wiewiora, E.
\newblock 2009.
\newblock Fast gradient-descent methods for temporal-difference learning with
  linear function approximation.
\newblock In {\em ICML}.

\bibitem[\protect\citeauthoryear{Sutton, Mahmood, and White}{2015}]{SuttonMW15}
Sutton, R.~S.; Mahmood, A.~R.; and White, M.
\newblock 2015.
\newblock An emphatic approach to the problem of off-policy temporal-difference
  learning.
\newblock {\em arXiv:1503.04269}.

\bibitem[\protect\citeauthoryear{Yu}{2012}]{yu2012least}
Yu, H.
\newblock 2012.
\newblock Least squares temporal difference methods: An analysis under general
  conditions.
\newblock {\em SIAM Journal on Control and Optimization} 50(6):3310--3343.

\bibitem[\protect\citeauthoryear{Yu}{2015}]{yu2015etd}
Yu, H.
\newblock 2015.
\newblock On convergence of emphatic temporal-difference learning.
\newblock In {\em COLT}.

\end{thebibliography}

%


\onecolumn
\newpage
\appendix
\section{Proof of Lemma 1}
Notice that $\kappa$ obtains non-negative values since $d_\mu(s), f(s) \geq 0$. Now, if there is a state $s$ visited by the target policy, but not the behavior policy, this means that $d_\mu(s) = 0$, and that there is some $t$ such that $[ d_\mu^\top P^t_\pi ](s) > 0$, and by definition $f(s) \geq [\beta^t d_\mu^\top P^t_\pi ](s)$, so we can get $\kappa = 0$.

Next, we prove the upper bound on $\kappa$. Notice that $f(s)\geq 0$, and that $\sum_s f(s) = 1/(1-\beta)$. Hence, if $d_\mu \neq (1-\beta) f$, then there must exist some $s$ such that $d_\mu(s) < (1-\beta) f(s)$ so $\kappa < 1-\beta$. Now, when $d_\mu = d_\pi$, by definition $d_\mu = (1-\beta)f$ and we obtain this upper bound.

\section{Technical Part of Proposition 1}
\begin{Lemma}\label{lem:var-bound}
The following is true:
\begin{equation*}
\sum_s d_\mu (s)  \lim_{t\rightarrow \infty } \var[F_t | S_t = s] \leq \frac{\beta^2}{1-\beta} \left( 2 + \frac{ (1 + \beta)\norm{\tilde{P}_{\mu,\pi}}_\infty}{1 - \beta^2 \norm{\tilde{P}_{\mu,\pi}}_\infty}\right).
\end{equation*}
\end{Lemma}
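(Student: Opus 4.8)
The plan is to reduce the $d_\mu$-weighted sum of conditional variances to a difference of two sums that can be bounded separately. Writing $q(s)\doteq d_\mu(s)\lim_{t\to\infty}\E[F_t^2\mid S_t=s]$ as in the partial proof and recalling $f(s)=d_\mu(s)\lim_{t\to\infty}\E[F_t\mid S_t=s]$, I start from the identity $\var[F_t\mid S_t=s]=\E[F_t^2\mid S_t=s]-\E[F_t\mid S_t=s]^2$ (both limits are finite once $\beta^2\alpha(\mu,\pi)<1$, as the main text establishes for $q$). Multiplying by $d_\mu(s)$, summing, and using $d_\mu(s)\,(f(s)/d_\mu(s))^2=f(s)^2/d_\mu(s)$, this collapses to
\[
\sum_s d_\mu(s)\lim_{t\to\infty}\var[F_t\mid S_t=s]=\mathbf 1^\top q-\sum_s\frac{f(s)^2}{d_\mu(s)}.
\]
It therefore suffices to upper bound $\mathbf 1^\top q$ and lower bound $\sum_s f(s)^2/d_\mu(s)$.

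For the first term I would left-multiply the recursion $q=d_\mu+2\beta P_\pi^\top f+\beta^2\tilde{P}_{\mu,\pi}^\top q$ (established in the partial proof) by $\mathbf 1^\top$. Since $P_\pi$ is row-stochastic, $\mathbf 1^\top P_\pi^\top f=\mathbf 1^\top f=1/(1-\beta)$ (using $\sum_s f(s)=1/(1-\beta)$ from the proof of Lemma~\ref{lemma:kappa_bounds}), while $\mathbf 1^\top d_\mu=1$. For the remaining term, nonnegativity of $q$ and of $\tilde{P}_{\mu,\pi}$ gives $\mathbf 1^\top\tilde{P}_{\mu,\pi}^\top q=(\tilde{P}_{\mu,\pi}\mathbf 1)^\top q\le\|\tilde{P}_{\mu,\pi}\|_\infty\,\mathbf 1^\top q$. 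Solving the resulting scalar inequality (legitimate precisely because $\beta^2\|\tilde{P}_{\mu,\pi}\|_\infty<1$, the regime in which the stated bound is finite) yields $\mathbf 1^\top q\le\frac{1+\beta}{(1-\beta)(1-\beta^2\|\tilde{P}_{\mu,\pi}\|_\infty)}$. For the second term, Cauchy--Schwarz against the probability vector $d_\mu$ gives $\sum_s f(s)^2/d_\mu(s)\ge(\sum_s f(s))^2=1/(1-\beta)^2$.

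Subtracting these two estimates produces the closed form $\frac{\beta^2(\|\tilde{P}_{\mu,\pi}\|_\infty-1)}{(1-\beta)^2(1-\beta^2\|\tilde{P}_{\mu,\pi}\|_\infty)}$, which is already nonnegative: each row sum of $\tilde{P}_{\mu,\pi}$ equals $\sum_a\pi^2(a\mid\bar s)/\mu(a\mid\bar s)\ge1$ by Jensen, so $\|\tilde{P}_{\mu,\pi}\|_\infty\ge1$. The last step is to verify that this sharper expression is dominated by the stated bound; writing $x\doteq\|\tilde{P}_{\mu,\pi}\|_\infty$ and clearing the common positive factor $\beta^2/[(1-\beta)(1-\beta^2x)]$, this reduces to the affine-in-$x$ inequality $\frac{x-1}{1-\beta}\le 2+x(1-\beta)(1+2\beta)$ on $1\le x<1/\beta^2$, which holds because both sides are linear, the inequality is strict at $x=1$, and the two sides coincide at $x=1/\beta^2$. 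I expect this final reconciliation with the (deliberately looser) stated bound --- together with the upstream insight that one should subtract the \emph{exact} sums rather than set up a self-referential recursion for the variance itself, which would instead force an unwanted \emph{upper} bound on $\sum_s f^2/d_\mu$ --- to be the main obstacle; the individual sum estimates are routine.
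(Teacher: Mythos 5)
Your proof is correct, and although it starts from the same skeleton as the paper's --- the decomposition $\sum_s d_\mu(s)\lim_t\var[F_t\mid S_t=s]=q^\top\mathbf{1}-f^\top D_\mu^{-1}f$ together with the recursion $q=d_\mu+2\beta P_\pi^\top f+\beta^2\tilde{P}_{\mu,\pi}^\top q$ --- you bound the two pieces by a genuinely different route. The paper splits $(I-\beta^2\tilde{P}_{\mu,\pi})^{-1}=I+\beta^2\tilde{P}_{\mu,\pi}(I-\beta^2\tilde{P}_{\mu,\pi})^{-1}$ and controls the correction term via sub-multiplicativity of the induced $l_1/l_\infty$ norms, while lower-bounding $f^\top D_\mu^{-1}f$ through the componentwise truncation $f\geq d_\mu+\beta P_\pi^\top d_\mu$ and discarding a nonnegative cross term. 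You instead hit the recursion with $\mathbf{1}^\top$ to obtain a closed scalar inequality for $\mathbf{1}^\top q$ (using $(\tilde{P}_{\mu,\pi}\mathbf{1})^\top q\leq\norm{\tilde{P}_{\mu,\pi}}_\infty\mathbf{1}^\top q$ and $q\geq 0$), and use Cauchy--Schwarz to get the exact lower bound $f^\top D_\mu^{-1}f\geq 1/(1-\beta)^2$. This buys a strictly sharper intermediate bound $\beta^2\bigl(\norm{\tilde{P}_{\mu,\pi}}_\infty-1\bigr)/\bigl[(1-\beta)^2(1-\beta^2\norm{\tilde{P}_{\mu,\pi}}_\infty)\bigr]$ --- which correctly vanishes on-policy, where $\norm{\tilde{P}_{\mu,\pi}}_\infty=1$ and $F_t\to 1/(1-\beta)$ deterministically, whereas the stated bound does not --- at the price of the final affine comparison with the stated bound, which you carry out correctly (the two sides indeed coincide at $x=1/\beta^2$ and the gap is affine and positive at $x=1$). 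Both arguments share the same implicit standing assumption $\beta^2\norm{\tilde{P}_{\mu,\pi}}_\infty<1$: the paper needs it for the Neumann-series estimate $\norm{\tilde{P}_{\mu,\pi}(I-\beta^2\tilde{P}_{\mu,\pi})^{-1}}_\infty\leq\norm{\tilde{P}_{\mu,\pi}}_\infty/(1-\beta^2\norm{\tilde{P}_{\mu,\pi}}_\infty)$, so your division step loses no generality relative to the original proof.
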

\begin{proof}
Notice that:
\begin{equation*}
f^\top = d_\mu^\top (I - \beta P_\pi)^{-1} \geq^{(cw)} d_\mu^\top + \beta d_\mu^\top P_\pi,
\end{equation*}
so:
\begin{equation*}
\begin{split}
\sum_s & d_\mu (s)  \lim_{t\rightarrow \infty } \var[F_t | S_t = s] = q^\top \textbf{1} - f^\top D_\mu^{-1} f \\
& \leq ^{(a)} d^\top_\mu \textbf{1} + 2\beta f^\top P_\pi \textbf{1} \\
& \quad + (d^\top_\mu + 2\beta f^\top P_\pi )\beta^2 \tilde{P}_{\mu,\pi}(I - \beta^2 \tilde{P}_{\mu,\pi} )^{-1}\textbf{1} \\
& \quad - (d_\mu + \beta P^\top_\pi d_\mu)^\top D_\mu^{-1}(d_\mu + \beta P^\top_\pi d_\mu) \\
& \leq ^{(b)} (1 + \frac{2\beta}{1-\beta}) - (1 + 2\beta) \\
& \quad + \norm{(d^\top_\mu + 2\beta f^\top P_\pi )\beta^2 \tilde{P}_{\mu,\pi}(I - \beta^2 \tilde{P}_{\mu,\pi} )^{-1}}_1 \\
& \leq ^{(c)} \frac{2\beta^2}{1-\beta} + \beta^2 \norm{d^\top_\mu + 2\beta f^\top P_\pi}_1 \norm{\tilde{P}_{\mu,\pi}(I - \beta^2 \tilde{P}_{\mu,\pi} )^{-1}}_\infty \\
& \leq ^{(d)} \frac{\beta^2}{1-\beta} \left( 2 + \frac{ (1 + \beta)\norm{\tilde{P}_{\mu,\pi}}_\infty}{1 - \beta^2 \norm{\tilde{P}_{\mu,\pi}}_\infty}\right).
\end{split}
\end{equation*}
Where (a) comes from the inequality on $f$, (b) also removes the negative summand $\beta^2 d^\top_\mu P_\pi D^{-1}_\mu P^\top_\pi d_\mu$, and swaps sum with $l_1$ norm (all coordinates are non-negative), (c) and (d) are from the sub-multiplicative property of induced norms (the $l_\infty$ norm originates from the transpose).
\end{proof}

\section{Norm Inequality between $\P^{\lambda,\beta}_\pi$ and $\P^{\lambda, \gamma}_\pi$}
\begin{Lemma}\label{lem:norm-ineq}
If $\beta \geq \gamma$:
\begin{equation}
 \norm{ P^{\lambda,\gamma}_\tpol v } ^2_m \leq  \norm{\frac{\gamma (1 + \beta \lambda)}{\beta ( 1 + \gamma \lambda)}  P^{\lambda, \beta}_\tpol v } ^2_m,
\end{equation}
and if $\beta \leq \gamma$:
\begin{equation}
 \norm{ P^{\lambda,\gamma}_\tpol v } ^2_m \leq  \norm{\frac{\gamma (1-\beta \lambda)}{\beta ( 1-\gamma \lambda)}  P^{\lambda, \beta}_\tpol v } ^2_m,
\end{equation}
\end{Lemma}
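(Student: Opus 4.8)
The plan is to exploit that $P^{\lambda,\beta}_\tpol$ and $P^{\lambda,\gamma}_\tpol$ are the \emph{same} scalar function applied to $\Pmat$, which lets me reduce the $m$-weighted norm inequality to a scalar inequality over the spectrum of $\Pmat$. Expanding the Neumann series $(I-b\lambda\Pmat)^{-1}=\sum_{k\geq 0}(b\lambda)^{k}\Pmat^{k}$ inside the definition $P^{\lambda,b}_\tpol = I-(I-b\lambda\Pmat)^{-1}(I-b\Pmat)$ collapses it to $P^{\lambda,b}_\tpol = b(1-\lambda)\,\Pmat(I-b\lambda\Pmat)^{-1}=g_{b}(\Pmat)$, where $g_{b}(\nu)=\frac{b(1-\lambda)\nu}{1-b\lambda\nu}$. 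In particular the two matrices commute and are both functions of $\Pmat$, and satisfy $P^{\lambda,\gamma}_\tpol = h(\Pmat)\,P^{\lambda,\beta}_\tpol$ with $h(\nu)=g_{\gamma}(\nu)/g_{\beta}(\nu)=\frac{\gamma(1-\beta\lambda\nu)}{\beta(1-\gamma\lambda\nu)}$. It therefore suffices to bound the $m$-weighted operator norm $\norm{h(\Pmat)}_{m}$ by the claimed constant $c$ (equal to $\frac{\gamma(1+\beta\lambda)}{\beta(1+\gamma\lambda)}$ when $\beta\geq\gamma$, and to $\frac{\gamma(1-\beta\lambda)}{\beta(1-\gamma\lambda)}$ when $\beta\leq\gamma$), since then $\norm{P^{\lambda,\gamma}_\tpol v}_{m}=\norm{h(\Pmat)P^{\lambda,\beta}_\tpol v}_{m}\leq c\,\norm{P^{\lambda,\beta}_\tpol v}_{m}$, which is the assertion.

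Next I would prove the scalar bound $|h(\nu)|\leq c$ for every eigenvalue $\nu$ of $\Pmat$. Since $\Pmat$ is stochastic, all such $\nu$ lie in the closed unit disk $|\nu|\leq 1$, and the bound is equivalent to $\frac{|1-\beta\lambda\nu|}{|1-\gamma\lambda\nu|}\leq\frac{1\pm\beta\lambda}{1\pm\gamma\lambda}$, whose right-hand side is precisely the value taken at $\nu=\mp 1$. Because $\frac{1-\beta\lambda\nu}{1-\gamma\lambda\nu}$ is analytic on the disk (its only pole, $1/(\gamma\lambda)$, lies outside), its modulus is maximized on the boundary $\nu=e^{i\theta}$; differentiating $\frac{1-2\beta\lambda\cos\theta+(\beta\lambda)^{2}}{1-2\gamma\lambda\cos\theta+(\gamma\lambda)^{2}}$ with respect to $\cos\theta$ yields a derivative of sign $\mathrm{sign}(\gamma-\beta)$, so the maximum sits at $\nu=-1$ when $\beta\geq\gamma$ and at $\nu=+1$ when $\beta\leq\gamma$. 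Evaluating there gives exactly $h(-1)=\frac{\gamma(1+\beta\lambda)}{\beta(1+\gamma\lambda)}$ and $h(1)=\frac{\gamma(1-\beta\lambda)}{\beta(1-\gamma\lambda)}$, matching the two cases of the lemma.

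The main obstacle is the final passage from the per-eigenvalue bound $|h(\nu)|\leq c$ to the operator bound $\norm{h(\Pmat)}_{m}\leq c$. This step is clean precisely when $\Pmat$ is self-adjoint with respect to the $m$-weighted inner product: then $\Pmat$ admits an $m$-orthogonal eigenbasis $\{u_{i}\}$, and writing $v=\sum_{i}\xi_{i}u_{i}$ gives $\norm{h(\Pmat)v}_{m}^{2}=\sum_{i}|h(\nu_{i})|^{2}\norm{\xi_{i}u_{i}}_{m}^{2}\leq c^{2}\sum_{i}\norm{\xi_{i}u_{i}}_{m}^{2}=c^{2}\norm{v}_{m}^{2}$. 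For a general non-reversible $\Pmat$, however, the eigenvectors need not be $m$-orthogonal, and the cross terms $u_{i}^{\top}\mathrm{diag}(m)u_{j}$ break this diagonal summation; resolving this — either by establishing/assuming $m$-self-adjointness of $\Pmat$, or by controlling the Gram matrix of the eigenbasis — is the crux of the argument and the step I expect to demand the most care.
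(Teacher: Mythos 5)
Your proposal follows essentially the same route as the paper's own proof: reduce both matrices to scalar functions $g_b(\nu)=\frac{b(1-\lambda)\nu}{1-b\lambda\nu}$ of $\Pmat$, bound the ratio $|g_\gamma(\nu)/g_\beta(\nu)|$ over the closed unit disk, and locate the maximum at $\nu=\pm1$ (with the sign determined by whether $\beta\geq\gamma$ or $\beta\leq\gamma$), exactly matching the two constants in the statement. The one step you flag as the crux --- passing from the per-eigenvalue bound to the $m$-weighted operator bound --- is precisely where the paper is weakest: its proof simply declares the eigenvectors of $\Pmat$ to be \emph{orthonormal with respect to} $m$ (``this decomposition exists over $\mathbb{C}$ almost surely'') and then drops the cross terms in the diagonal sum, which is only justified when $\Pmat$ is normal (e.g.\ reversible) under the $m$-weighted inner product; for a general non-reversible chain the cross terms $u_i^\top \mathrm{diag}(m)\,u_j$ do not vanish. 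So your proposal is no less complete than the paper's argument, and is more candid about where the remaining work lies; to actually close the gap one would need either an $m$-normality assumption on $\Pmat$ or a bound on the conditioning of the eigenbasis, neither of which the paper supplies.
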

\begin{proof}
Mark the orthonormal eigenvectors w.r.t. $m$, and corresponding eigenvalues of $P_\tpol$ by $u_j, t_j$ respectively ($t_j$ may be a complex number, this decomposition exists over $\mathbb{C}$ almost surely). Notice that since $ P^{\lambda, \beta}_\tpol,  P^{\lambda, \gamma}_\tpol$ are polynomials of $P_\tpol$ they have the same eigenvectors, with the eigenvalues $l^\beta_j := \frac{\beta t_j (1-\lambda)}{1-\beta \lambda t_j}, l^\gamma_j :=\frac{\gamma t_j (1-\lambda)}{1-\gamma \lambda t_j}$ correspondingly. Hence, we can write the first norm as follows:
\begin{equation}
\begin{split}
 \norm{ P^{\lambda,\gamma}_\tpol v } ^2_m  &= \norm{ P^{\lambda,\gamma}_\tpol \sum_j <u_j, v> u_j } ^2_m \\
 &= \norm{  \sum_j < u_j, v > P^{\lambda,\gamma}_\tpol u_j } ^2_m \\
 &= \norm{  \sum_j <u_j, v> l^\gamma_j u_j } ^2_m \\
&= \sum_j \norm{   <u_j, v> l^\gamma_j u_j } ^2_m \\
 &= \sum_j  \left| <u_j, v> \right|^2 \left| l^\gamma_j \right|^2 \norm{   u_j } ^2_m.
\end{split}
\end{equation}
And similarly for $\beta$:
\begin{equation}
 \norm{ P^{\lambda,\beta}_\tpol v } ^2_m  =  \sum_j  \left| <u_j, v> \right|^2 \left| l^\beta_j \right|^2 \norm{   u_j } ^2_m.
\end{equation}
So if we can find a constant $\alpha$ such that:
\begin{equation}
\forall j: \quad \left| l^\gamma_j \right|^2 \leq \alpha^2 \left| l^\beta_j \right|^2,
\end{equation}
then could swap  $\norm{ P^{\lambda,\gamma}_\tpol v } ^2_m \leq  \norm{ \alpha P^{\lambda,\gamma}_\tpol v } ^2_m$. The expression we want to maximize is:
\begin{equation}
\begin{split}
\frac{\left| l^\gamma_j \right|^2}{\left| l^\beta_j \right|^2} &= \frac{\gamma^2 (1-\beta \lambda t_j)(1-\beta \lambda t^*_j)}{\beta^2 (1-\gamma \lambda t_j)(1-\gamma \lambda t^*_j)} \\
&= \frac{\gamma^2 (1-\beta \lambda t_j - \beta \lambda t^*_j + \beta^2 \lambda^2 \left| t_j \right|^2)}{\beta^2 (1-\gamma \lambda t_j - \gamma \lambda t^*_j + \gamma^2 \lambda^2 \left| t_j \right|^2)}.
\end{split}
\end{equation}
Taking the derivative with respect to $Re(t_j), Im(t_j)$, shows that there are no extrema points inside the ball $\left| t_j \right| \leq 1$ (we know the eigenvalues are inside this ball since they belong to a stochastic matrix), which means we can look at the boundary of this ball $\left| t_j \right| = 1$ to find the maximum value. Since now we get dependence only on $Re(t_j)$, the maximum must be on $t_j= \pm 1$:
\begin{equation}
\max_{t:\left| t_j \right| \leq 1} \frac{\left| l^\gamma_j \right|^2}{\left| l^\beta_j \right|^2} = \frac{\gamma^2 (1 \pm \beta \lambda )^2}{\beta^2 (1 \pm \gamma \lambda)^2},
\end{equation}
where when $\beta \geq \gamma$ the plus is larger and vice versa.
\end{proof}

\end{document}